\newtheorem*{remark}{Remark}
\newtheorem{theorem}{Theorem}
\newtheorem{lemma}[theorem]{Lemma}
\def\BibTeX{{\rm B\kern-.05em{\sc i\kern-.025em b}\kern-.08em
    T\kern-.1667em\lower.7ex\hbox{E}\kern-.125emX}}
\begin{document}

\title{Online Reinforcement Learning for Periodic MDP\\ 
\thanks{The authors are with the Department of Electrical Engineering
IIT Delhi, . Email: \{Ayush.Aniket, arpanc\}@ee.iitd.ac.in . }
\thanks{The work of A.C. was supported by the professional development fund and professional development allowance at IIT Delhi, and the following grants: (i) grant no. RP04215G from I-Hub Foundation on Cobotics, and (ii) grant no. MI02266 through the MFIRP scheme at IIT Delhi.}
}

\author{\IEEEauthorblockN{ Ayush Aniket}
\and
\IEEEauthorblockN{ Arpan Chattopadhyay}}

\maketitle

\begin{abstract}
We study learning in periodic Markov Decision Process(MDP), a special type of non-stationary MDP where both the state transition probabilities and reward functions vary periodically, under the average reward maximization setting. We formulate the problem as a stationary MDP by augmenting the state space with the period index, and propose  a periodic upper confidence bound  reinforcement learning-2 (PUCRL2) algorithm. We show that the  regret of PUCRL2 varies linearly with the period $N$ and as $\sqrt{Tlog T}$ with the horizon length $T$. Numerical results demonstrate the efficacy of PUCRL2.
\end{abstract}

\begin{IEEEkeywords}
Periodic Markov decision processes, non-stationary
reinforcement learning
\end{IEEEkeywords}

\section{Introduction}
Reinforcement learning (RL) deals with the problem of optimal sequential decision making in an unknown environment. Sequential decision making in an environment with an unknown statistical model  is typically modeled as a Markov decision process (MDP) where the decision maker, at each time step, has to take an action $a_t$ based on the state $s_t$ of the environment, resulting to a probabilistic  transition to the next state $s_{t+1}$ and a reward $r_t$ accrued by the decision maker depending on the current state and current action.  RL has widespread applications in many areas including robotics \cite{kober2013reinforcement}, resource allocation in wireless networks \cite{5137416}, healthcare \cite{gottesman2019guidelines}, finance \cite{bacoyannis2018idiosyncrasies} etc.

In a stationary MDP, the unknown transition probabilities and reward functions are invariant with time. However, the ubiquitous presence of non-stationarity in real world scenarios often limits the application of stationary reinforcement learning algorithms. Most of the existing works require information about the maximum possible amount of changes that occur in the environment via variation budget in the transition and reward function, or via the number of times the environment changes; this does not require any assumption on the nature of non-stationarity in the environment. On the contrary, we consider a periodic MDP whose state transition probabilities and reward functions are unknown but periodic with a known period $N$. In this setting, we propose the PUCRL2 algorithm and analyse its regret.

Non-stationary RL has been extensively studied in varied scenarios \cite{auer2008near,gajane2018sliding,li2019online,ortner2020variational,cheung2020reinforcement,fei2020dynamic,domingues2021kernel,mao2021near,zhou2020nonstationary,touati2020efficient,wei2021non}.The authors of \cite{auer2008near} propose a restart version of the popular UCRL2 algorithm  meant for stationary RL problems, which achieves an $\mathcal{\Tilde{O}}(l^{1/3}T^{2/3})$ regret where $T$ is the number of time steps, under the setting in which the MDP changes at most $l$ number of times. In the same setting \cite{gajane2018sliding} shows that UCRL2 with sliding windows achieves the same regret. In time-varying environment, a more apposite measure for performance of an algorithm is dynamic regret which measures the difference between accumulated reward through online policy and that of the optimal offline non-stationary  policy. This was first analysed in \cite{li2019online} in a solely  reward varying environment. The authors of   \cite{ortner2020variational} propose first variational dynamic regret bound of $\mathcal{\Tilde{O}}(V^{1/3}T^{2/3})$, where $V$ represents the total variation in the MDP. The work of \cite{cheung2020reinforcement} provides the sliding-window UCRL2 with confidence widening, which achieves an
$\mathcal{\Tilde{O}}((B_r+B_p)^{1/4}T^{3/4})$ dynamic regret, where $B_r$ and $B_p$ represent the maximum amount of possible variation in reward function and transition kernel respectively. They also propose a Bandit-over-RL (BORL) algorithm which tunes the UCRL2-based algorithm in the setting of unknown variational budgets. Further, in the model-free and episodic setting,  \cite{wei2021non} propose  policy optimization algorithms and \cite{fei2020dynamic} propose RestartQ-UCB which achieves a dynamic regret bound  of $\mathcal{\Tilde{O}}(\Delta^{1/3}HT^{2/3})$,where $\Delta$ represent the amount of changes in the MDP and H represents the episode length. The paper \cite{domingues2021kernel} studies a kernel based approach for non-stationarity in MDPs with metric spaces. In the linear MDP case, \cite{mao2021near} and \cite{zhou2020nonstationary} provide optimal regret guarantees.
 Finally the authors of \cite{wei2021non} provide a black-box algorithm which turns any (near)-stationary algorithm to work in a non-stationary environment with optimal dynamic regret  $\Tilde{O}(\min{\sqrt{LT},\Delta^{1/3}T^{2/3}})$, where $L$ and $\Delta$ represent the number and amount of changes of the environment,  respectively.
 
 Periodic MDP  has been marginally studied in literature.  The authors of \cite{riis1965discounted}   study it in the discounted reward setting, where a policy-iteration algorithm is proposed. The authors of   \cite{veugen1983numerical} propose the first state-augmentation method for conversion of periodic MDP into a stationary one, and  analyse the performance of various iterative methods for finding the optimal policy. Recently, \cite{hu2014near} derive a corresponding value iteration algorithm suitable for periodic problems in discounted reward case and provide near-optimal bounds for greedy periodic policies. To the best of our knowledge, RL in periodic MDP has not been studied.
 
In this paper, we make the following contributions: 
 \begin{itemize}
     \item  We study a special form of non-stationarity where the unknown reward and transition functions vary periodically with a known period $N$. 
    \item We propose a modification PUCRL2 of UCRL2, which treats the periodic MDP as stationary MDP with augmented state space. We derive a static regret bound which has  a linear dependence on $N$ and sub-linear dependence on $T$. \item Numerical results show that PUCRL2 performs much better against competing algorithms.

\end{itemize}

\section{Problem Formulation}
\label{Problem Formulation}

A discrete time periodic MDP is defined as the tuple  
$( \mathcal{S}, \mathcal{A}, N,\{ P_i\}_{1\leq i\leq N} ,\{r_i\}_{1\leq i\leq N})$. We consider a finite state space $\mathcal{S}$ and a finite action space  $\mathcal{A}$,  with cardinality $S$ and $A$ respectively. For the $i^{th}$ period index, $P_i : \mathcal{S}\times\mathcal{A}\times\mathcal{S} \xrightarrow{} [0,1]$ defines the transition probability function such that  $\mathbf{p}_i(\cdot | s,a)$ is the probability distribution for next state given current state-action pair, for all $(s,a)$ pair and $r_i : \mathcal{S}\times\mathcal{A} \xrightarrow{} [0,1] $ denotes the reward function where $r_i(s,a)$ is the mean reward given current state-action pair, for all  $(s,a)$ pair. The number  $N \geq 2$ represents the period of the MDP such that  $\mathbf{P}_{t+N} = \mathbf{P}_t$ and $\mathbf{r}_{t+N} = \mathbf{r}_t$ for any time index $t \in \{0,1,2,3,\cdots\}$. The horizon length is $T$ and we assume that $T >> N$.

Now, the PMDP can be transformed into a stationary MDP with augmented state-space (henceforth referred as AMDP). In this AMDP, we couple the period index and states together to obtain an augmented state space $\mathcal{S'} =  \mathcal{S} \times \{1,2,...N\}$; if the state of the original MDP is $s$ at time $t$, then the corresponding state in the AMDP will be $(s, ((t-1)  \mod{N})+1)$, where $\mod$ represents the modulo operator. Consequently, the (time-homogeneous) transition probability of the AMDP for current state $s$ and current action $a$ becomes:
\begin{equation*}
p((s',n') | (s,n),a) = \left\{
        \begin{array}{ll}
            0 & n' \neq n+1 \mod{N}\\
            p_n(s'|s,a) & n' = n+1 \mod{N} \\
        \end{array}
    \right.
\end{equation*}

The corresponding mean reward of the AMDP is given by  $r((s,n),a) = r_{n}(s,a)$. Obviously, under any deterministic stationary policy for the AMDP,  each (state,period index) pair can only be visited after $N$ number of time steps. Thus, the PMDP becomes a stationary AMDP with periodic transition matrix as shown in Figure \eqref{fig:AMDP}. Let $\rho^*$ denote the optimal time-averaged (average expected reward over large number of time steps and then taking a Cesaro limit) reward \cite[Section 8.2.1]{puterman2014markov} of the AMDP. In this paper, we seek to develop an RL algorithm so as to minimize the static regret with respect to this optimal average reward $\rho^*$. Let $\pi$ be any generic  policy for the AMDP. Our problem is:
  \begin{equation*} \label{eqn:regret}
  \min_{\pi}  \sum_{t =1}^{T} (\rho^* - \mathop{\mathbb{E}}_{\pi} (r_t((s_t,n_t),a_t)))
 \end{equation*}

\begin{figure} 
  \centering
  \includegraphics[scale=0.5]{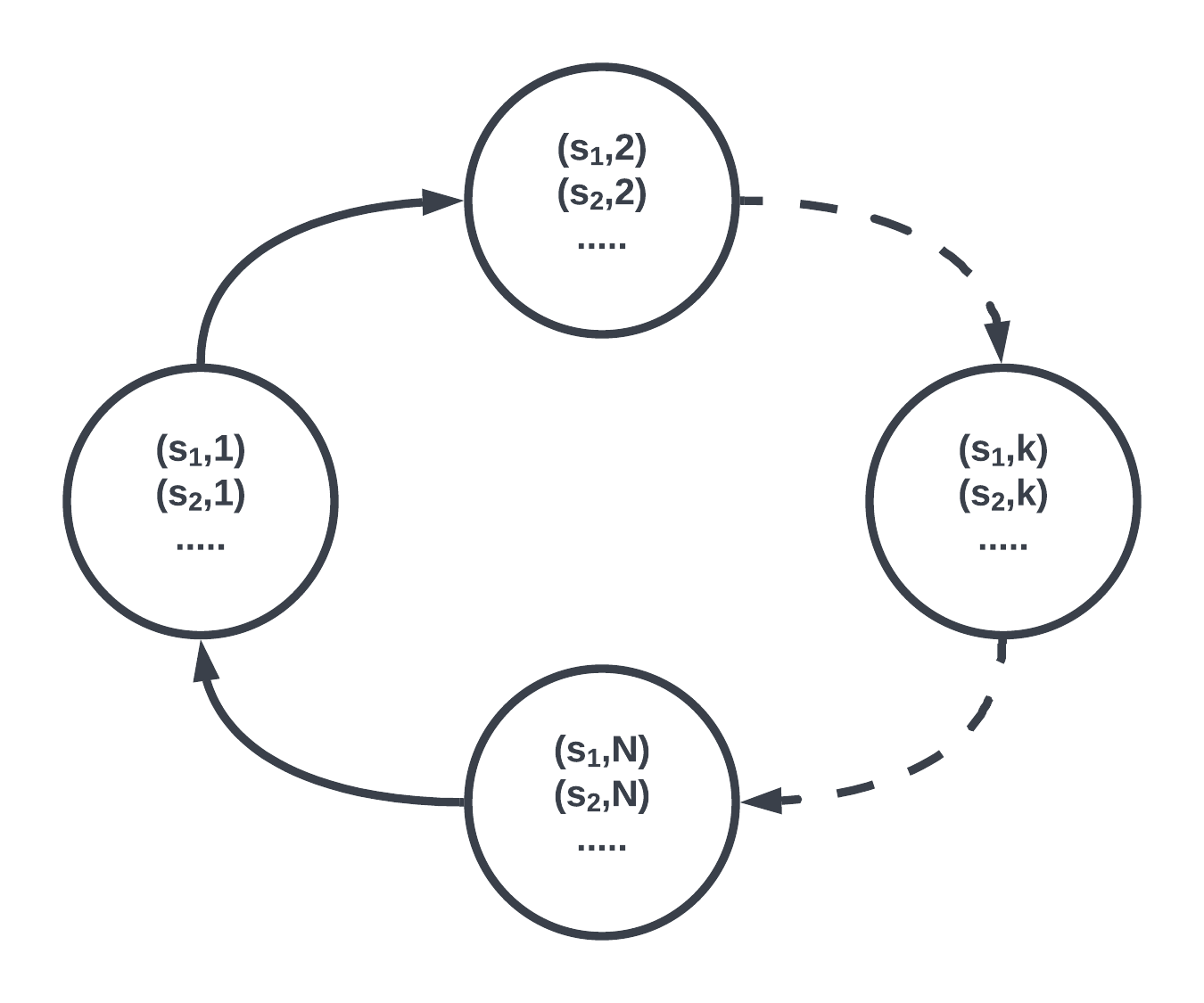}
  \caption{Augmented MDP with periodic states.}
  \label{fig:AMDP}
\end{figure}
\section{The proposed algorithm}
\label{section:algorithms}

In this section, we provide a non-trivial modification to the state of the art UCRL2 algorithm \cite{auer2008near} for PMDP. Our proposed Algorithm~\eqref{alg:PUCRL2} is named as   PUCRL2. PUCRL2  estimates the mean reward and the transition kernel for each augmented state-action pair, while keeping in mind that the transition occurs only to augmented states with the next period index and   the probability of transitioning to other augmented states is zero. Hence the algorithm only estimates the non-zero transition probabilities   $\hat{p}_k((s' | (s,n),a))$ at any time $k$. 

\subsection{PUCRL2 algorithm}
\begin{algorithm}[H]
\caption{P-UCRL2}\label{alg:PUCRL2}
\begin{algorithmic}
\STATE \textbf{Input:} $S,A,N,$ confidence parameter $\delta  \in (0,1) $.\\
\STATE \textbf{Initialization:} $t=1, n = 1 $ \\
\FOR{phase \textit{k} = 1,2,...} 
\STATE{$t_k = t$ \COMMENT {starting time of episode k}}
\STATE{\textbf{1. Initialize episode} \textit{k}:} 
\footnotesize
$v_k((s,n),a)=0$, \\ $n_k((s,n),a) = max\{1,\sum_{\tau =1}^{t-1} {\mathds{1}_{((s_\tau,n_\tau),a_\tau) = ((s,n),a)}\}}$, \\ $n_k((s,n),a,s') = max\{1,\sum_{\tau = 1}^{t-1}{\mathds{1}_{((s_\tau,n_\tau),a_\tau,s_{\tau+1}) = ((s,n),a,s')}\}}$
\newline
\STATE $\hat{p}_k(s'|(s,n),a) = \frac{n_k((s,n),a,s')}{n_k((s,n),a)} \forall{(s,n),a} $ 
\newline
\STATE $ \hat{r}_k((s,n),a) = \frac{\sum_{\tau = 1}^{t-1}(r_\tau\mathds{1}_{((s_\tau,n_\tau),a_\tau) = ((s,n),a)} )}{n_k((s,n),a)} \forall{(s,n),a} $
\normalsize
\vspace{+2mm}
\newline
\STATE{\textbf{2. Update the confidence set}: We define the confidence region for  transition probability function and reward functions as:
\footnotesize
\begin{eqnarray} \label{eqn:confidence_bound_p}
&\mathcal{P}((s,n),a) \coloneqq \{ \mathbf{\Tilde{p}}(\cdot | (s,n),a) :  \nonumber \\ 
& \lVert \mathbf{\Tilde{p}}(\cdot | (s,n),a) - \mathbf{\hat{p}}((\cdot | (s,n),a)) \rVert_1 \leq \sqrt{\frac{14SN \log(2At_k / \delta)}{n_k((s,n),a)}} \}
\end{eqnarray} 
\begin{eqnarray} \label{eqn:confidence_bound_r}
&\mathcal{R}((s,n),a) \coloneqq \{ \Tilde{r}((s,n),a) : \nonumber \\  & \mid \Tilde{r}((s,n),a) - \hat{r}((s,n),a) \mid \leq \sqrt{\frac{7 \log(2SAt_k / \delta)}{2 n_k((s,n),a)}} \}
\end{eqnarray} 
\normalsize
Then, $\mathcal{M}_k$ is the set of all MDP models, such that  \eqref{eqn:confidence_bound_p} and \eqref{eqn:confidence_bound_r} is satisfied for all $((s,n),a)$ pair.}
\STATE{\textbf{3. Optimistic Planning: Compute $(\tilde{M}_k,\tilde{\pi}_k) =$ Modified-Extended Value Iteration \eqref{alg:MEVI}$(\mathcal{M}_k,1/\sqrt{t_k})$}}
\newline
\STATE{\textbf{4. Execute Policies:}}
\WHILE{$v_k(n(s,n),a) < n_k((s,n),a)$}
\STATE Draw $a_t \sim \tilde{\pi}_k$ ; observe reward $r_t$, and the next state $(s_{t+1},n+1)$.
 \STATE Set $v_k((s_t,n_t),a_t) = v_k((s_t,n_t),a_t) +1$ and $t =t+1,  n =((t-1)  \mod{N})+1)$ 
\ENDWHILE
\ENDFOR
\end{algorithmic}
\end{algorithm}

Like UCRL2, PUCRL2 proceeds in episodes. 
At the beginning of each episode, it computes the estimates from previous observations of visits, transitions and rewards accumulated prior to the episode for each (state,period index)-action pair which are stored in $n_k((s,n),a)$, $n_k((s,n),a,s') $ and $\hat{r}_k((s,n),a)$ respectively.
With high probability, the true AMDP lies within a confidence region computed around these estimates as shown in Lemma  \eqref{lemma:probability-of-mdp-outside-confidence-region}. Then PUCRL2 utilizes the confidence bounds as in \eqref{eqn:confidence_bound_p} and \eqref{eqn:confidence_bound_r}, to find an optimistic MDP $\tilde{M}_k$ and policy $\tilde{\pi}_k$ using Modified-EVI Algorithm~\eqref{alg:MEVI} adapted from the extended value iteration (EVI) algorithm depicted in \cite[Section 3.1.2]{auer2008near}. This policy $\tilde{\pi}_k$ is used to take action in the episode until the cumulative number of visits to any  (state,period index) pair gets doubled; similar to the doubling criteria for episode termination of \cite{auer2008near}.

\subsection{Modified-EVI}

Extended value iteration is used in the class of UCRL algorithms to obtain an optimistic MDP model and policy from a high probability confidence region.
According to the convergence criteria of Extended Value Iteration as in \cite[Section 3.1.3]{auer2008near},  aperiodicity is essential i.e. the algorithm should not choose a policy with periodic transition matrix. 
However, as discussed in Section~\eqref{Problem Formulation}, the AMDP is periodic in nature. Hence, in order to guarantee convergence, we modify the EVI algorithm by applying an aperiodicity transformation (as in \cite[Section 8.5.4]{puterman2014markov} ) \eqref{eqn:aperiodicity_transform}.

Thus at each iteration, Modified-EVI (Algorithm~\eqref{alg:MEVI}) applies a self transition probability of $(1-\tau)$, where $0<\tau<1$, to the same (state,period index) pair. As shown in \cite[Proposition 8.5.8]{puterman2014markov}, this transformation does not affect the average reward of any stationary policy.

\begin{algorithm}[H]
\caption{Modified - EVI}\label{alg:MEVI}
\begin{algorithmic}
\STATE \textbf{Input:} $ \mathcal{M}_k, \epsilon =  1/\sqrt{t_k}$\\
\STATE \textbf{Initialization:} $u_0(s,n) = 0 \forall s,n,s^{*} \in \mathcal{S}, n^{*} \in \{1,...N\}$ \\
\FOR{i = 0,1,2,...} 
\vspace{-5mm}
\STATE 
\footnotesize
\begin{equation} \label{eqn:aperiodicity_transform}
\begin{split}
u_{i+1}(s,n) & = \max_{a \in \mathcal{A}} \{ \max_{\dot{r} \in \mathcal{R}((s,n),a)} {\dot{r}((s,n),a)} \\
 & + \tau * \max_{\dot{p}  \in \mathcal{P}((s,n),a)} \{\sum_{s'} u_i(s',n+1)\dot{p}(s'|(s,n),a)\} \\
 & + (1-\tau) * u_i(s,n) 
\end{split}
\end{equation}

\STATE $u_{i+1}(s,n)  = u_{i+1}(s,n) - u_{i+1}(s^{*},n^{*})$
\vspace{+2mm}
\IF{ $\max_{(s,n)} \{ u_{i+1}(n,s) - u_{i}(n,s)\} - \min_{(s,n)} \{u_{i+1}(n,s) - u_{i}(n,s)\}  \leq \epsilon $}
\normalsize
\STATE Break the for loop.
\ENDIF
\ENDFOR
\end{algorithmic}
\end{algorithm}

\subsection{Analysis}

Let $T((s',n')| M, (s,1))$ denote the expected first hitting time of  $(s',n')$ of an AMDP $M$, starting from $(s,1)$ under a stationary policy $\pi : \mathcal{S} \times \{1,2,....,N\} \xrightarrow{} \mathcal{A}$ . As in \cite[Definition 1]{auer2008near} the diameter of an AMDP $M$ is defined as:

\footnotesize
\begin{equation} \label{eqn:diameter}
    D_{aug} = \max _{(s',n')\neq (s,1),(s',s) \in \mathcal{S} }\min_{\pi} \mathds{E}[T((s',n')| M, (s,n))]
\end{equation}
\normalsize
\newtheorem{1}{}
\begin{theorem} \label{theorem:regret_bound}
With probability at least $1-\delta$,  the regret for PUCRL2 is:
\footnotesize
\begin{equation*}
\Delta(PUCRL2) \leq 34D_{aug} SN\sqrt{AT \log \frac{T}{\delta}}
\end{equation*}
\end{theorem}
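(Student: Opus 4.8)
The plan is to follow the regret-decomposition template of UCRL2 from \cite{auer2008near}, adapting each step to the augmented state space $\mathcal{S}' = \mathcal{S}\times\{1,\dots,N\}$, which has cardinality $SN$, and to account for the sparsity of the AMDP transition kernel (each augmented state-action pair transitions only to $S$ possible next states, not $SN$). First I would split the horizon into episodes and write the total regret as $\sum_k \Delta_k$, where $\Delta_k = \sum_{t=t_k}^{t_{k+1}-1}(\rho^* - r((s_t,n_t),a_t))$. Using Lemma~\ref{lemma:probability-of-mdp-outside-confidence-region}, with probability at least $1-\delta$ the true AMDP $M$ lies in $\mathcal{M}_k$ for all $k$, so optimism gives $\tilde\rho_k \ge \rho^*$ up to the $1/\sqrt{t_k}$ accuracy of Modified-EVI; hence $\Delta_k$ is bounded (up to lower-order terms) by $\sum_t (\tilde\rho_k - r((s_t,n_t),a_t))$. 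The aperiodicity transformation \eqref{eqn:aperiodicity_transform} is invoked here to justify that Modified-EVI converges and that its output policy has the same average reward as in the untransformed AMDP, via \cite[Proposition 8.5.8]{puterman2014markov}.

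Next I would convert this per-episode gap into a telescoping sum over the value-iteration bias vector $u_k$. Using the near-optimality equation satisfied on termination of Modified-EVI, $\tilde\rho_k - r((s_t,n_t),a_t)$ is controlled by $(\tilde{\mathbf p}_k(\cdot|(s_t,n_t),a_t) - \mathbf e_{(s_{t+1},n_{t+1})})^\top u_k$ plus the confidence-width terms for reward and transitions plus the $\epsilon_k = 1/\sqrt{t_k}$ slack. The transition term splits as usual into (i) a martingale difference sequence summed over $t$, bounded by Azuma--Hoeffding with increments of order $\mathrm{span}(u_k) \le D_{aug}$, contributing $O(D_{aug}\sqrt{T\log(T/\delta)})$, and (ii) the deviation $\lVert \tilde{\mathbf p}_k - \hat{\mathbf p}_k\rVert_1 + \lVert \hat{\mathbf p}_k - \mathbf p\rVert_1$ times $\mathrm{span}(u_k)$, which by \eqref{eqn:confidence_bound_p} is of order $D_{aug}\sqrt{SN\log(At_k/\delta)/n_k((s,n),a)}$. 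The reward confidence width from \eqref{eqn:confidence_bound_r} contributes a smaller $O(\sqrt{\log(SAt_k/\delta)/n_k})$ term. Summing $\sum_k \sum_{t} 1/\sqrt{n_k((s_t,n_t),a_t)}$ over episodes via the standard pigeonhole/"$\sum 1/\sqrt{N}$" argument gives $O(\sqrt{SNAT})$, since the number of augmented state-action pairs is $SNA$; the doubling episode criterion bounds the number of episodes by $O(SNA\log T)$, whose contribution through the $\epsilon_k$ slack and the $\tilde\rho_k$-rounding is lower order.

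Collecting terms, the dominant contribution is $D_{aug}\sqrt{SN\log(AT/\delta)} \cdot \sqrt{SNAT} = D_{aug}SN\sqrt{AT\log(AT/\delta)}$, and absorbing constants and the $\log A$ into $\log(T/\delta)$ yields the claimed $34 D_{aug} SN\sqrt{AT\log(T/\delta)}$. The point where care is genuinely needed — and where this differs from a rote restatement of UCRL2 — is the interplay between the period structure and the three places $N$ enters: the confidence radius in \eqref{eqn:confidence_bound_p} already carries an $SN$ inside the square root (because the $L_1$-deviation bound for the empirical distribution on the effective support must be stated over the augmented space), the pigeonhole sum carries another factor from $SNA$ pairs, and the diameter $D_{aug}$ is itself inflated by roughly a factor $N$ relative to the one-step diameter since every augmented transition advances the period index by one.

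The main obstacle I anticipate is making the optimism/diameter argument fully rigorous under the aperiodicity transformation: one must check that $\mathrm{span}(u_k) \le D_{aug}$ still holds for the \emph{transformed} MDP's value-iteration iterates (the transformation changes the transition kernel, hence a priori the diameter), and that the extended-value-iteration convergence proof of \cite[Section 3.1.3]{auer2008near} goes through with the self-loop probability $1-\tau$ — i.e. that the modified Bellman operator is still a span-contraction on the augmented model class $\mathcal{M}_k$. Handling this cleanly, together with bounding the effect of the $\max_{\dot p}$ inner maximization being taken over the sparse polytope $\mathcal{P}((s,n),a)$ rather than the full simplex, is the technical heart of the proof; everything else is a careful but routine bookkeeping of the extra $N$ factors through the UCRL2 machinery.
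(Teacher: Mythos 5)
Your proposal follows essentially the same route as the paper (UCRL2-style episode decomposition over the augmented space of $SN$ states, the confidence-region lemma with Weissman's $L^1$ inequality, optimism via Modified-EVI, span-times-confidence-width plus a martingale term, pigeonhole over $SNA$ pairs), but it has a genuine gap at exactly the point you flag as the ``main obstacle'' and then leave unresolved: the span bound under the aperiodicity transformation. You use increments of order $\mathrm{span}(\mathbf{u}_k)\le D_{aug}$ for both the Azuma--Hoeffding term and the transition-confidence term, but for the iterates of Modified-EVI this is not true as stated --- the EVI span property only gives $\mathrm{span}(\mathbf{u}_{i_k})\le D_{aug}^\tau$, the diameter of the \emph{transformed} AMDP, which can be as large as $D_{aug}/\tau$. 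The paper closes this by a cancellation you do not supply: in the transformed Bellman update \eqref{eqn:aperiodicity_transform} the next-state value enters with an explicit prefactor $\tau$, so in the decomposition of $\Delta_k^p$ both the confidence-width term and the martingale term inherit a factor $\tau$; combining $\mathrm{span}(\mathbf{u}_{i_k})\le D_{aug}^\tau\le D_{aug}/\tau$ (the latter from \cite[Section 3.3.1]{fruit2019exploration}) with this prefactor, the $\tau$ cancels in \eqref{eqn:del_p_first_term} and \eqref{eqn:del_p_second_term}, leaving $D_{aug}$ in the final bound with no dependence on the transformation parameter. Without this argument your bound would degrade to $O\bigl(\frac{D_{aug}}{\tau} SN\sqrt{AT\log(T/\delta)}\bigr)$.

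Two smaller points. First, the convergence of Modified-EVI does not require re-proving a span-contraction property: the self-loop with probability $1-\tau$ is exactly the standard aperiodicity transformation of \cite[Section 8.5.4]{puterman2014markov}, which makes every stationary policy's chain aperiodic without changing its average reward (Proposition 8.5.8), so the convergence criterion of \cite[Section 3.1.3]{auer2008near} applies directly --- that is the entire reason the transformation is introduced. Second, your concern about the inner maximization over a ``sparse polytope'' is moot for this theorem: the confidence set \eqref{eqn:confidence_bound_p} deliberately does not encode the known sparsity (see the Remark following Theorem~\ref{theorem:regret_bound}); the $SN$ inside the radius comes from applying the $L^1$-deviation inequality with $2^{SN}$ events over the full augmented state space, and the analysis proceeds over the unrestricted confidence set.
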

\normalsize
\begin{proof}
See Appendix \eqref{sec:proof_theorem_1}. 
\end{proof}

\begin{remark}
The confidence bound \eqref{eqn:confidence_bound_p} ignores the known sparsity in the transition function. If we include that knowledge, we obtain the same regret bound. However, when implementing this case Modified-EVI does not converge for few iterations. This issue is left as open work for now.
\end{remark}

\section{Numerical results}
We compare the performance of PUCRL2 with three other algorithms: (i) UCRL2 \cite{auer2008near} which provides optimal static regret in stationary MDP setting, (ii) UCRL3 \cite{bourel2020tightening} which is a recent improvement over UCRL2, and (iii) BORL \cite{cheung2020reinforcement} which is a   parameter free algorithm for the non-stationary setting.

\subsection{Regret of BORL for PMDP}
The variation budget as in \cite{cheung2020reinforcement} for the rewards is defined as: 
\footnotesize
 \begin{equation*}
    B_r = \sum_{t=1}^{T-1} \max_{s \in \mathcal{S}, a \in \mathcal{A}}| r_{t+1}(s,a) - r_t(s,a) |
 \end{equation*}
 
 \normalsize
For a PMDP:
\vspace{-1mm}
\footnotesize
\begin{equation*} 
\begin{split}
B_r & = \sum_{t=1}^{T-1} \max_{s \in \mathcal{S}, a \in \mathcal{A}}| r_{t+1}(s,a) - r_t(s,a) |  \\
 &  \approx (T/N) \sum_{t=1}^N \max_{s \in \mathcal{S}, a \in \mathcal{A}}| r_{t+1}(s,a) - r_t(s,a) | \approx \mathcal{\Tilde{O}}(T)
\end{split}
\end{equation*}

\normalsize

Regret bounds of BORL and SW-UCRL \cite{cheung2020reinforcement} for non-stationary MDP are derived in terms of the reward variation budget $B_r$ and a very similar variation budget $B_p$ on the transition kernels. However, for a PMDP, these two algorithms do not exploit the additional structure arising out of periodicity. Since $B_r$ or $B_p$ turn out to be of the order $\mathcal{\Tilde{O}}(T)$ , the $\mathcal{\Tilde{O}}((B_r+B_p)^{1/4}T^{3/4})$ regret bound of  BORL or SW-UCRL becomes $\mathcal{\Tilde{O}}(T)$ for PMDP.

\begin{figure} 
  \centering
  \includegraphics[scale=0.5]{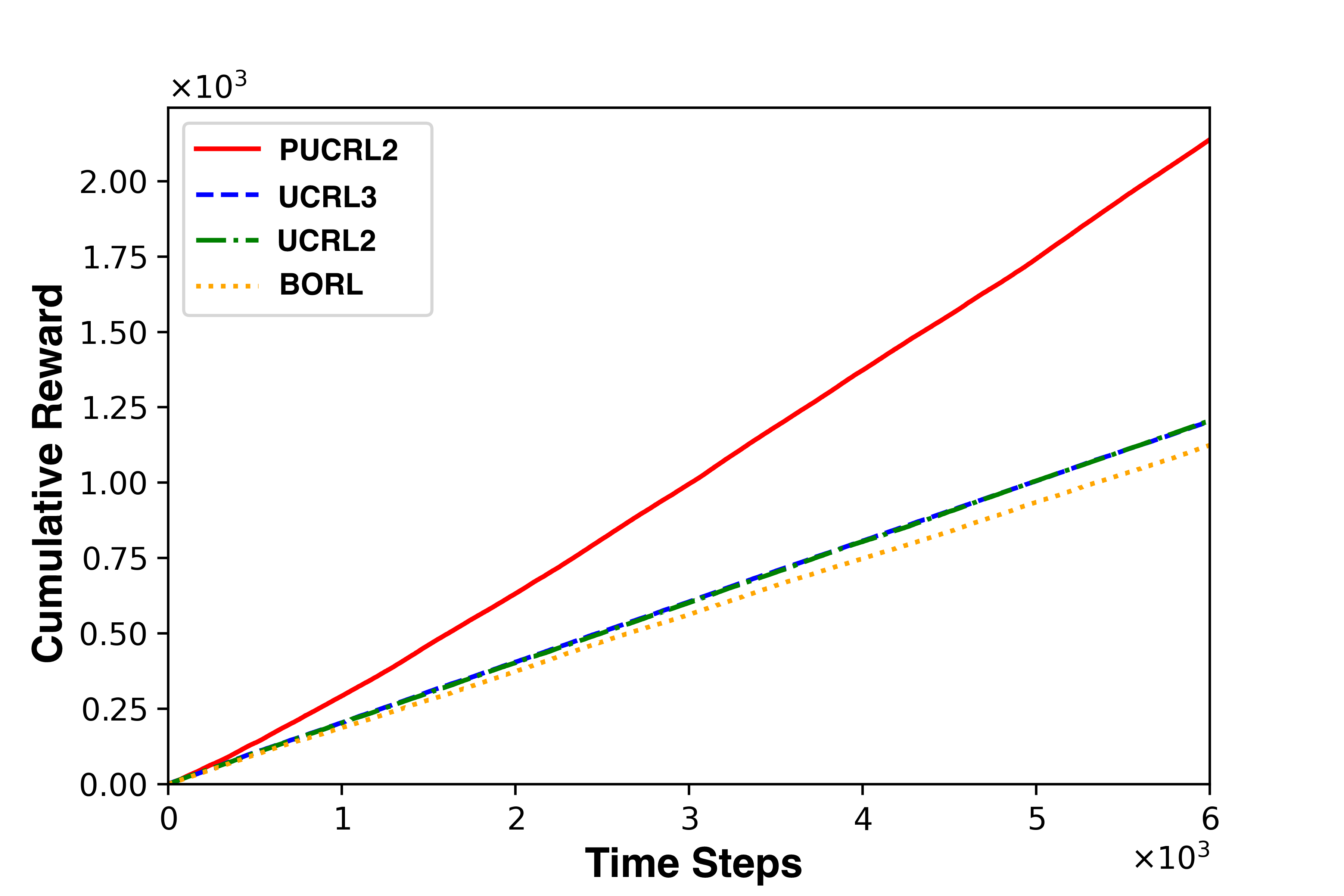}
  \includegraphics[scale=0.5]{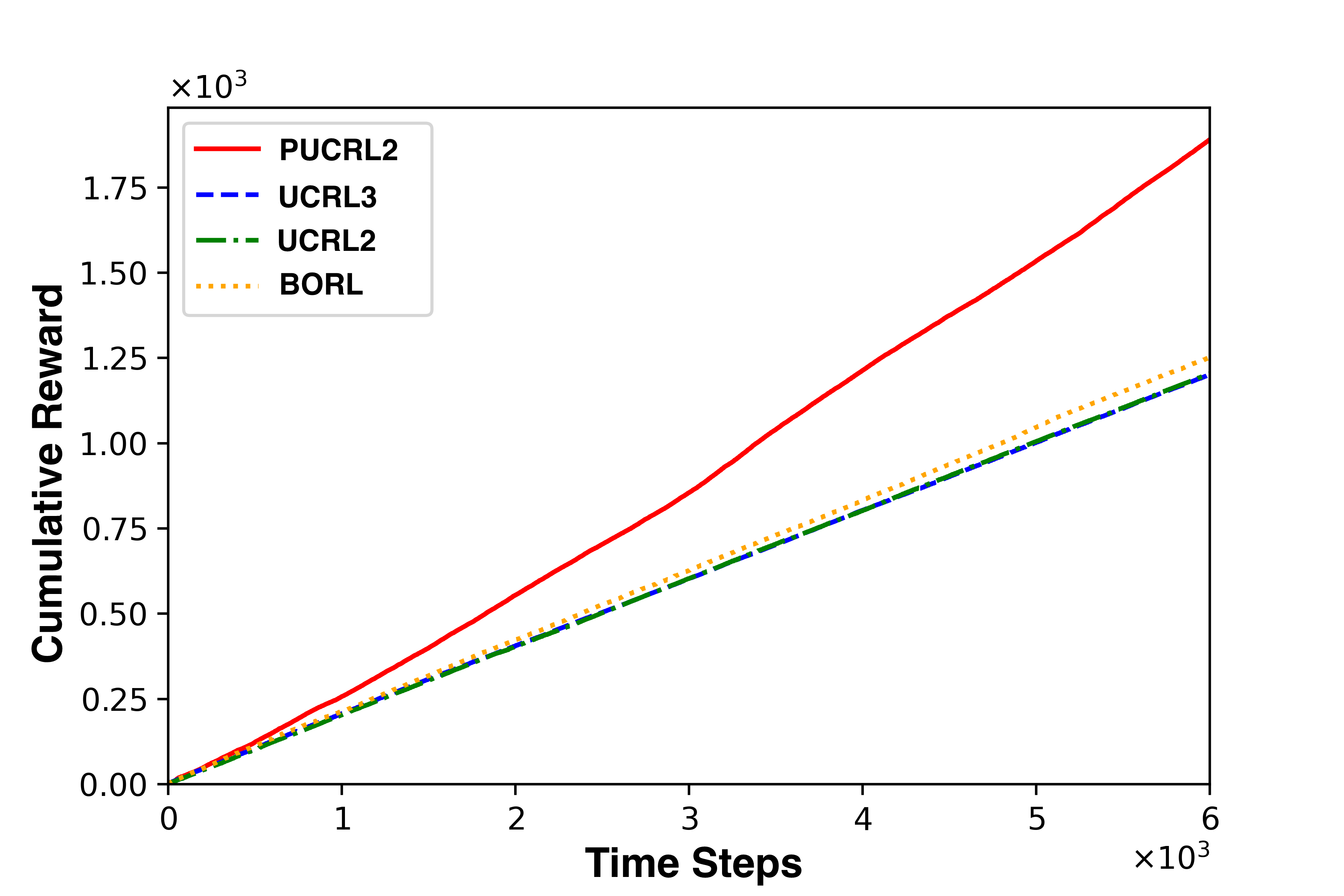}
  \caption{Cumulative reward for a  2-state, 2-action PMDP with N = 5(Above) and  N = 15(Below).}
  \label{fig:Reward_plot}
\end{figure}

\subsection{Our experiment}
Our synthetic data-set formulation is inspired by \cite{cheung2020reinforcement}. We consider a MDP with two states $\{s_1, s_2\}$, two actions  $\{a_1, a_2\}$ and $T=  6000$. The variation in the rewards and transition function are modeled using   cosine functions  as follows:

\footnotesize
\begin{equation*} 
\begin{split}
r_t(s_1,a_1) & = 0.2+0.3 \cos (2\pi t/N ), r_t(s_1,a_2) = 0.2+\cos (2\pi t/N ) \\
r_t(s_2,a_1) & = 0.2-\cos (2\pi t/N ), r_t(s_2,a_2) = 0.2-0.3\cos (2\pi t/N )
\end{split}
\end{equation*}
\normalsize
and 
\footnotesize
\begin{eqnarray*}
p_t(s_1|s_1,a_1) &=& 1,  p_t(s_2|s_1,a_1) = 0,\\
p_t(s_1|s_1,a_2) &=& 1-\beta_t,  p_t(s_2|s_1, a_2) = \beta_t, \\
p_t(s_1|s_2,a_1) &=& 0,  p_t(s_2|s_2,a_1) = 1,  \\
p_t(s_1|s_2,a_2) &=& \beta_t,  p_t(s_2|s_2, a_2) = 1-\beta_t
\end{eqnarray*}

\normalsize
where,
$\beta_t =0.5+0.3 \sin (5V_p \pi t/N )$. We set the period $N = 5 $ and $15$, $\delta=0.05$, and  compare the cumulative reward of the algorithms after averaging   over
$30$ independent runs. The results are shown in Figure \eqref{fig:Reward_plot}.
We clearly observe that PUCRL2 outperforms other algorithms.

\section{Conclusion}
Periodic non-stationarity in Markov
Decision Processes has been studied in this paper, where the state transition and reward
functions vary periodically. Existing RL algorithms for non-stationary and stationary MDPs fail to perform optimally in this setting. We provide a new algorithm called PUCRL2, which outperforms competing algorithms in the field. The static regret term depends linearly on the diameter of the AMDP, the comparison of which with the maximum diameter of non-stationary MDPs is left as our future work.

{\small
\bibliographystyle{unsrt}
\bibliography{reference.bib}
}

\newpage
 \appendices

 \section{PROOF OF THEOREM 1}
 \label{sec:proof_theorem_1}

The proof borrows some ideas from \cite{auer2008near} and is divided into sections. 
In Appendix \eqref{subsec:splitting_into_episodes}, we upper bound the total regret by removing the randomness in the rewards accumulated. The regret in the episodes where the true AMDP does not lie in the set of plausible AMDPs is bounded above in Appendix \eqref{subsec:failing_confidence_region}, and with the assumption that it does in Appendix \eqref{subsec:M_in_M_k}. Finally, we complete the proof in Appendix \eqref{subsec:completing_the _proof}.

\subsection{Splitting into episodes} \label{subsec:splitting_into_episodes}

As in \cite[Section 4.1]{auer2008near} using Hoeffding’s inequality , we can decompose the regret as:
\footnotesize
\begin{eqnarray*} \label{eqn2}
\Delta &=& \sum_{t =1}^{T} (\rho^* - r_t((s_t,n_t),a_t)) \\
  &\leq& T\rho^* - \sum_{(s,n),a} N((s,n),a) r((s,n),a) + \sqrt{\frac{5}{8}T \log \frac{8T}{\delta}}  
\end{eqnarray*}
\normalsize
with probability at least $1-\frac{\delta}{12 T^{5/4}}$
, where $N((s,n),a)$ is the count of (state,period)-action pair after $T$ steps.

Let there be m episodes in total , thus
$\sum_{k=1}^m v_k((s,n),a) = N((s,n),a)$.

The regret in each episode can be defined as : $\Delta_k = \sum_{(s,n),a} v_k((s,n),a) (\rho^* - r((s,n),a))$. Hence,
\footnotesize
\begin{equation} \label{eqn:regret_decomposition}
\Delta\leq \sum_{k=1}^m \Delta_k + \sqrt{\frac{5}{8}T \log \frac{8T}{\delta}}
\end{equation}
\normalsize

\subsection{ Dealing with failing confidence regions} \label{subsec:failing_confidence_region}

\begin{lemma} \label{lemma:probability-of-mdp-outside-confidence-region}
For any $t \geq 1$, the probability that the true AMDP M is not contained in the set of plausible
AMDPs $\mathcal{M}(t)$ at time t is at most $\delta / 15 t^6$, that
is 
\begin{equation*}
\mathbb{P}\{M \notin \mathcal{M}(t)\} < \delta / 15t^6
\end{equation*}

\end{lemma}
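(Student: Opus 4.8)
The plan is to mirror the concentration argument of \cite[Appendix C.1]{auer2008near}, adapted to the augmented state space. Recall that the set $\mathcal{M}(t)$ of plausible AMDPs at time $t$ consists of those MDPs $\tilde M$ whose reward function $\tilde r$ and transition function $\tilde{\mathbf{p}}$ satisfy the two confidence constraints \eqref{eqn:confidence_bound_r} and \eqref{eqn:confidence_bound_p} for every augmented pair $((s,n),a)$. So it suffices to bound separately the probability that the true reward $r((s,n),a)$ violates \eqref{eqn:confidence_bound_r} for some pair, and the probability that the true transition $\mathbf{p}(\cdot|(s,n),a)$ violates \eqref{eqn:confidence_bound_p} for some pair, and then take a union bound over all $SN\cdot A$ augmented pairs and over the number of observations of each pair (which is at most $t$).

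First I would handle the reward deviation. For a fixed pair $((s,n),a)$ with $j$ observations, $\hat r$ is an average of $j$ i.i.d.\ (given the counts) samples in $[0,1]$, so Hoeffding's inequality gives $\mathbb{P}(|\hat r - r| \ge \sqrt{\tfrac{7\log(2SAt/\delta)}{2j}}) \le 2\exp(-7\log(2SAt/\delta)) = 2\,(2SAt/\delta)^{-7}$. Summing over $j=1,\dots,t$ and over the $SA$ reward pairs (the period index $n$ is already folded into the counting, but one can also just bound by $SNA$ pairs — the exponent $7$ leaves ample room) yields a contribution of order $\delta^7 t^{-6}$, far smaller than the claimed $\delta/(30 t^6)$. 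Second, for the transition deviation I would use the $L_1$-deviation bound for empirical distributions on $SN$ atoms: $\mathbb{P}(\|\tilde{\mathbf{p}} - \hat{\mathbf{p}}\|_1 \ge \varepsilon) \le 2^{SN} e^{-j\varepsilon^2/2}$ (Weissman et al.; this is exactly the inequality underlying the $\sqrt{14 S \log(\cdot)/n_k}$ radius in UCRL2, with $S$ replaced by $SN$ since the augmented state space has $SN$ elements). Plugging in $\varepsilon = \sqrt{\tfrac{14SN\log(2At/\delta)}{j}}$ gives $2^{SN} e^{-7SN\log(2At/\delta)} = 2^{SN}(2At/\delta)^{-7SN} \le (2At/\delta)^{-7SN}\cdot 2^{SN}$, which after a union bound over $j\le t$ and over the $SNA$ pairs is again dominated by a term of order $\delta^{6}t^{-6}$ or smaller. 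Adding the two contributions and being generous with constants gives $\mathbb{P}\{M\notin\mathcal{M}(t)\} < \delta/15t^6$.

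The one genuinely nontrivial point — the place I'd be most careful — is that the counts $n_k((s,n),a)$ are random and data-dependent, so $\hat r$ and $\hat{\mathbf{p}}$ are not simple empirical averages over a fixed number of i.i.d.\ draws. The standard fix, as in \cite{auer2008near}, is the "union bound over all possible values of the count": for each pair and each integer $j\in\{1,\dots,t\}$ one applies the concentration inequality to the (hypothetical) average of the first $j$ visits to that pair, which \emph{is} a martingale-difference / i.i.d.\ average, and then unions over $j$. This is exactly why the $\log$ arguments in \eqref{eqn:confidence_bound_p}–\eqref{eqn:confidence_bound_r} carry the factor $t_k$ rather than something smaller. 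I would state this reduction explicitly, then invoke Hoeffding and the Weissman $L_1$ bound on the fixed-$j$ averages, and finish with the arithmetic of the union bound. Everything else is routine constant-chasing; the sparsity of the transition kernel (each row supported on $S$ rather than $SN$ atoms) is not needed here and, consistent with the Remark, the bound is stated using the full $SN$-atom estimate.
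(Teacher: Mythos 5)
Your overall route is the same as the paper's: Hoeffding for the rewards, the Weissman et al.\ $L_1$ deviation bound with $m=SN$ atoms for the transitions, and a union bound over the (random) number of observations and over all augmented pairs, exactly as in \cite[Appendix C.1]{auer2008near}. The one ingredient you are missing is the only genuinely periodicity-specific step of the paper's argument: in the AMDP a fixed pair $((s,n),a)$ can be visited at most once every $N$ steps, so the union over the possible values of the count ranges only over $l=1,\dots,\lfloor t/N\rfloor$, not over all $l\le t$. That restriction contributes a factor $1/N$ which exactly cancels the factor $N$ coming from the $SNA$ augmented pairs, and it is what makes the final constant $\delta/(15t^6)$ come out uniformly in $N$.

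Without it, your constant-chasing does not close for the reward term. Your per-event probability is $2(2SAt/\delta)^{-7}$; summing over $j\le t$ and over $SNA$ pairs gives roughly $2SNAt\,(2SAt/\delta)^{-7}=\frac{N\delta^{7}}{64\,S^{6}A^{6}t^{6}}$, which is below $\delta/(30t^{6})$ only when $N\delta^{6}\lesssim 2S^{6}A^{6}$. Because the reward radius's logarithm, $\log(2SAt/\delta)$, contains no $N$, the ``exponent-$7$ slack'' absorbs powers of $S$, $A$, $t$ and $\delta$ but not the stray $N$, so the claimed bound fails for sufficiently large $N$ (e.g.\ $t=1$, $\delta$ close to $1$). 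The transition term is safe, since its radius carries the explicit $SN$ factor and its tail is $2^{SN}(2At/\delta)^{-7SN}$. To repair the argument, either import the paper's union over $l\le\lfloor t/N\rfloor$ (preferable, since it works with the algorithm's radii exactly as stated), or you would have to enlarge the reward radius to use $\log(2SNAt/\delta)$, which changes the algorithm. Everything else in your proposal --- handling the data-dependent counts via fixed-count averages, and not exploiting the row sparsity of the transition kernel --- matches the paper.
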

\begin{proof}
As in \cite[Section C.1]{auer2008near} we bound the transition functions using $L^1$-deviation concentration inequality over $m$ distinct events from $l$ samples \cite{weissman2003inequalities}:
\begin{eqnarray*}
\mathbb{P} \{ \lVert  \hat{\mathbf{p}}(\cdot) - \mathbf{p(\cdot)} \rVert_1 \geq \epsilon_p \} \leq (2^m-2) \exp(-l\epsilon_p^2 / 2)
\end{eqnarray*}

As the state space has been augmented, we have $SN$ states and hence $m = SN$ events. 
\newline
Thus, setting 
\footnotesize
\begin{eqnarray*}
\epsilon_p = \sqrt{\frac{2}{l} \log(\frac{2^{SN}20SAt^7}{\delta})} \leq \sqrt{\frac{14SN}{l} \log(\frac{2At}{\delta})}
\end{eqnarray*}
\normalsize
we get,
\footnotesize
\begin{eqnarray*}
\mathbb{P} \{ \lVert  \hat{\mathbf{p}}(\cdot|(s,n),a) - \mathbf{p}(\cdot|(s,n),a) \rVert_1 \geq \sqrt{\frac{14SN \log(2At / \delta)}{l}} \} \leq \frac{\delta}{20t^7SA}
\end{eqnarray*}
\normalsize
For rewards, we use Hoeffding's inequality to bound the deviation of empirical mean from true mean given $l$ i.i.d samples
\footnotesize
\begin{eqnarray*}
\mathbb{P} \{ \lvert  \hat{r} - r \rvert \geq \epsilon_r\} \leq 2 \exp(-2l\epsilon_r^2)
\end{eqnarray*}
\normalsize
Setting 
\footnotesize
\begin{eqnarray*}
\epsilon_r = \sqrt{\frac{1}{2l} \log(\frac{120SAt^7}{\delta})} \leq \sqrt{\frac{7}{2l} \log(\frac{2SAt}{\delta})}
\end{eqnarray*}
\normalsize
we get for all $((s,n),a)$ pair
\footnotesize
\begin{eqnarray*}
\mathbb{P} \{ |  \hat{r}((s,n),a) - r((s,n),a) | \geq \sqrt{\frac{7 \log(2SAt / \delta)}{2l}} \} \leq \frac{\delta}{60t^7SA}
\end{eqnarray*}

\normalsize
 A union bound over all possible values of $l$ i.e. $l$ = 1,2,..... $\lfloor t/N \rfloor$ ,gives ($n_k((s,n),a)$ denotes the number of visits in $((s,n),a)$)
\footnotesize

\begin{align}
\mathbb{P} \{ \lVert \hat{\mathbf{p}}(\cdot|(s,n),a) & - \mathbf{p}(\cdot|(s,n),a) \rVert_1 \geq \sqrt{\frac{14SN \log(2At / \delta)}{n_k((s,n),a)}} \} \nonumber \\
&\leq \sum_{t=1} ^{\lfloor t/N \rfloor} \frac{\delta}{20t^7SA} \nonumber \leq  \sum_{t=1} ^{t/N}  \frac{\delta}{20t^7SA}= \frac{\delta}{20t^6SAN} \nonumber
\end{align}
\begin{eqnarray*}
\mathbb{P} \{ |  \hat{r}((s,n),a) - r((s,n),a) | \geq \sqrt{\frac{7 \log(2SAt / \delta)}{2n_k((s,n),a)}} \} \leq \sum_{t=1} ^{\lfloor t/N \rfloor} \frac{\delta}{60t^7SA} \\
\leq \sum_{t=1} ^{t/N}  \frac{\delta}{60t^7SA}= \frac{\delta}{60t^6SAN}
\end{eqnarray*}

\normalsize

Summing these probabilities over all (state,period)-action pairs we obtain the claimed bound 
$\mathbb{P}\{M \notin \mathcal{M}(t)\} < \delta / 15t^6$.

\end{proof}

\begin{lemma} \label{lemma:regret_due_to_failing_confidence_region}
With probability at least $1-\frac{\delta}{12T^{5/4} }$, the regret occurred due to failing confidence region i.e. 
\begin{equation} \label{eqn:M_not_in_confidence_bound}
\sum_{k=1}^m \Delta_k \mathds{1}_{M \notin \mathcal{M}_k} \leq \sqrt{T}
\end{equation}
\end{lemma}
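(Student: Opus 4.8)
The plan is to follow the structure of the corresponding argument in \cite[Section 4.2]{auer2008near}, adapting it to the augmented state space. The key observation is that in any episode $k$ where the true AMDP fails the confidence region, $M \notin \mathcal{M}_k$, the per-episode regret $\Delta_k$ is still trivially bounded, because rewards lie in $[0,1]$ and $\rho^* \le 1$, so $\Delta_k = \sum_{(s,n),a} v_k((s,n),a)(\rho^*-r((s,n),a)) \le \sum_{(s,n),a} v_k((s,n),a) = t_{k+1}-t_k$, the length of episode $k$. Thus $\sum_{k=1}^m \Delta_k \mathds{1}_{M\notin\mathcal{M}_k} \le \sum_{t=1}^T \mathds{1}_{M\notin\mathcal{M}(t)}$, and it remains to control the expected number of time steps at which the true model escapes its confidence region.

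First I would invoke Lemma~\eqref{lemma:probability-of-mdp-outside-confidence-region}, which gives $\mathbb{P}\{M\notin\mathcal{M}(t)\} < \delta/(15t^6)$ for every $t\ge 1$. For the initial segment $t < T^{1/4}$ I would bound the contribution crudely by $T^{1/4}$ (these steps contribute at most one unit of regret each). For $t \ge T^{1/4}$ I would sum the tail: $\sum_{t \ge T^{1/4}} \mathbb{P}\{M\notin\mathcal{M}(t)\} \le \sum_{t\ge T^{1/4}} \delta/(15 t^6)$, which is dominated by an integral of order $\delta \cdot (T^{1/4})^{-5}/15 = \delta/(15 T^{5/4})$, a negligible quantity. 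A Markov-type / union argument then shows that with probability at least $1-\delta/(12T^{5/4})$ the total number of escape steps beyond $T^{1/4}$ is zero, so the whole sum $\sum_{k}\Delta_k\mathds{1}_{M\notin\mathcal{M}_k}$ is at most $T^{1/4} \le \sqrt{T}$ (using $T\ge 1$).

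The only place the periodicity enters is through the factor $N$ in the confidence radius and the count $n_k((s,n),a)$; since Lemma~\eqref{lemma:probability-of-mdp-outside-confidence-region} already absorbs all of that and delivers the clean $\delta/(15t^6)$ bound, the present lemma is essentially a bookkeeping step on top of it. I expect the main (minor) obstacle to be getting the constants and the exponent in the failure probability $\delta/(12T^{5/4})$ to line up exactly with the tail sum — in particular making sure the $T^{1/4}$ cutoff is chosen so that both the deterministic head ($\le T^{1/4} \le \sqrt T$) and the stochastic tail (probability $\le \delta/(12T^{5/4})$ of being nonzero) come out with the stated constants. This is routine and mirrors \cite{auer2008near} verbatim once the augmented-state diameter and counts are in place.
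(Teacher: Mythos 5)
Your overall plan (trivially bound the per-episode regret by the episode length and then invoke Lemma~\ref{lemma:probability-of-mdp-outside-confidence-region} with a cutoff at $T^{1/4}$) is the right skeleton, but the central accounting step is not valid. You claim $\sum_{k=1}^m \Delta_k \mathds{1}_{M\notin\mathcal{M}_k} \le \sum_{t=1}^T \mathds{1}_{M\notin\mathcal{M}(t)}$, i.e.\ you convert an episode-level failure into one unit of regret per \emph{failing time step}. The failure indicator, however, is attached to the whole episode and is evaluated only at its start time $t_k$ (the confidence set is not recomputed inside an episode, and even if one defines $\mathcal{M}(t)$ for every $t$ from the running counts, $M\notin\mathcal{M}(t_k)$ does not force $M\notin\mathcal{M}(t)$ for the remaining steps of the episode). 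A failing episode contributes regret equal to its entire length, and by the doubling stopping rule that length can be as large as roughly $t_k$, so those steps are not themselves ``failing steps.'' This also makes your head bound of $T^{1/4}$ too optimistic: an episode starting at $t_k\le T^{1/4}$ with a failed confidence region can by itself cost up to $t_k\le T^{1/4}$, and there can be up to $T^{1/4}$ such episodes (the start times are distinct integers), which is exactly why the lemma asserts $\sqrt{T}$ and not $T^{1/4}$.

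The correct bookkeeping, which is what the paper imports from \cite[Section 4.2]{auer2008near} after replacing the concentration step by Lemma~\ref{lemma:probability-of-mdp-outside-confidence-region}, is: from the stopping criterion $v_k((s,n),a)\le n_k((s,n),a)$ one gets $\Delta_k \le \sum_{(s,n),a} v_k((s,n),a) \le \sum_{(s,n),a} n_k((s,n),a) \le t_k$, hence $\sum_k \Delta_k\mathds{1}_{M\notin\mathcal{M}_k} \le \sum_k t_k\,\mathds{1}_{M\notin\mathcal{M}(t_k)}$. Episodes with $t_k\le T^{1/4}$ then contribute at most $T^{1/4}\cdot T^{1/4}=\sqrt{T}$ deterministically, while for $t_k> T^{1/4}$ a union bound with Lemma~\ref{lemma:probability-of-mdp-outside-confidence-region} gives failure probability at most $\sum_{t>T^{1/4}} \delta/(15 t^6) \le \delta/(12 T^{5/4})$, so with that probability these episodes contribute nothing. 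Your tail estimate is essentially fine (it is a plain union bound, not a Markov argument, and the integral comparison gives roughly $\delta/(75T^{5/4})$, well under $\delta/(12T^{5/4})$), and your remark that periodicity enters only through Lemma~\ref{lemma:probability-of-mdp-outside-confidence-region} is correct; the gap is solely in the head-term accounting described above.
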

\begin{proof}
Refer \cite[Section 4.2]{auer2008near} with Lemma \eqref{lemma:probability-of-mdp-outside-confidence-region} instead of \cite[Appendix C.1]{auer2008near}
\end{proof}

\subsection{Episodes with $M \in \mathcal{M}_k$} \label{subsec:M_in_M_k}

By the assumption $M \in \mathcal{M}_k$ and \cite[Theorem 7]{auer2008near}, the optimistic optimal average reward of the near optimal policy $\tilde{\pi}_k$ chosen in Modified-EVI \eqref{alg:MEVI} is such that $\Tilde{\rho_k} \geq \rho^* - 1/\sqrt{t_k}$.

Thus, we can write the regret of an episode as :

\footnotesize
\begin{equation} \label{eqn:epsiodic_regret_optimism}
\begin{split}
\Delta_k & = \sum_{(s,n),a} v_k((s,n),a) (\rho^* - r((s,n),a))  \\
 & \leq \sum_{(s,n),a} v_k((s,n),a) (\tilde{\rho}_k - r((s,n),a)) + \sum_{(s,n),a} \frac{v_k((s,n),a)}{\sqrt{t_k}}. 
\end{split}
\end{equation}

\normalsize
Let us define $i_k$ to be the last iteration when convergence criteria holds and Modified-EVI terminates, thus as in \cite[Section 4.3.1]{auer2008near}

\footnotesize
\begin{equation} \label{eqn:EVI_condition}
|u_{i_k+1}(s,n) - u_{i_k}(s,n) - \tilde{\rho_k}| \leq 1/\sqrt{t_k}  
\end{equation}
\normalsize
for all $(s,n)$. Expanding as in \eqref{eqn:aperiodicity_transform}
\footnotesize
\begin{eqnarray*} 
u_{i_k+1}(s,n) &=& \tilde{r}_k((s,n),\tilde{\pi}_k(s,n)) \\
 &+& \tau * \{\sum_{s'} u_{i_k}(s',n+1)\tilde{p}_k(s'|(s,n),\tilde{\pi}_k(s,n))\} \\
 &+& (1-\tau) * u_{i_k}(s,n)\}
\end{eqnarray*}

\normalsize

Putting it in \eqref{eqn:EVI_condition}, we get

\footnotesize
\begin{eqnarray*}  
|\tilde{\rho_k} - \tilde{r}_k((s,n),\tilde{\pi}_k(s,n)) 
- \tau * \{\sum_{s'} u_{i_k}(s',n+1)\tilde{p}_k(s'|(s,n),\tilde{\pi}_k(s,n))\} \\
- (\cancel{1}-\tau) * u_{i_k}(s,n)\ + \cancel{u_{i_k}(s,n)} | \leq 1/\sqrt{t_k} 
\end{eqnarray*}
\vspace{-5mm}
\begin{equation*} 
\begin{split}
\tilde{\rho_k}  - \tilde{r}_k((s,n),\tilde{\pi}_k(s,n)) & \leq  \tau * \{\sum_{s'} u_{i_k}(s',n+1) \\
 & \tilde{p}_k(s'|(s,n),\tilde{\pi}_k(s,n))\} - \tau * u_{i_k}(s,n) + 1/\sqrt{t_k}
\end{split}
\end{equation*}

\normalsize 
Thus, putting the above result in \eqref{eqn:epsiodic_regret_optimism}, and noting that $\sum_{(s,n),a} v_k((s,n),a) = 0$, for $a \neq \tilde{\pi}_k(s,n)$, we get

\footnotesize
\begin{equation} \label{eqn:del_p_del_r}
\begin{split}
\Delta_{k} & \leq \underbrace{\tau \sum_{(s,n),a} v_k((s,n),a) ( \sum_{s'} u_{i_k}(s',n+1)\tilde{p}_k(s'|(s,n),a)  -  u_{i_k}(s,n))}_{ \coloneqq \Delta_{k}^{p}} \\
& + \underbrace{\sum_{(s,n),a} v_k((s,n),a) (\tilde{r}_k((s,n),a)) - r((s,n),a))}_{\coloneqq \Delta_{k}^{r}} \\
& + 2 \sum_{(s,n),a} \frac{v_k((s,n),a)}{\sqrt{t_k}}  
\end{split}
\end{equation}

\normalsize

\subsubsection{Bounding $\Delta_k^p$}

\footnotesize

\begin{equation} \label{eqn:del_p}
\begin{split}
\Delta_k^p & =  \tau\sum_{(s,n),a} v_k((s,n),a)( \{\sum_{s'} u_{i_k}(s',n+1)\tilde{p}_k(s'|(s,n),a)\} 
\\ & -  u_{i_k}(s,n))) \\
& = \tau\sum_{(s,n),a} v_k((s,n),a)( \sum_{s'} u_{i_k}(s',n+1) \\
& (\tilde{p}_k(s'|(s,n),a) - p_k(s'|(s,n),a)) + \tau\sum_{(s,n),a} v_k((s,n),a) 
\\  & ( \sum_{s'}u_{i_k}(s',n+1) p_k(s'|(s,n),a) - u_{i_k}(s,n)) 
\end{split}
\end{equation}

\normalsize
By the property of extended value iteration\cite[Section 4.3.1]{auer2008near}, extended to Modified-EVI
\begin{equation} \label{eqn:span_bound}
     span(\mathbf{u}_{i_k}) = \max_{(s,n)} u_{i_k}(s,n) - \min_{(s,n)} u_{i_k}(s,n) \leq D_{aug}^\tau  \\
\end{equation} 
where $D_{aug}^\tau$ represents  the diameter of the augmented MDP with aperiodicity transformation.

Since, $ \sum_{s'} p_k(s'|(s,n),a) =1$ and $\sum_{s'}\tilde{p}_k(s'|(s,n),a)=1$, we can replace $u_{i_k}(s,n)$ by 

\footnotesize
\begin{equation} 
    w_k(s,n) = u_{i_k}(s,n) - \frac{\max_{(s,n)} u_{i_k}(s,n) + \min_{(s,n)} u_{i_k}(s,n)}{2}
\end{equation}
\normalsize
such that it follows from \eqref{eqn:span_bound} that $span(\mathbf{u}_{i_k}) = span (\mathbf{w}_k$).

Hence, $\lVert \mathbf{w}_k \rVert_\infty \leq D_{aug}^\tau/2$.

According to \cite[Section 3.3.1]{fruit2019exploration}, $D_{aug}^\tau  \leq D_{aug}/\tau$. Hence,
$\lVert \mathbf{w}_k \rVert_\infty \leq D_{aug}/2\tau$.

Thus, the first term in \eqref{eqn:del_p} can be bounded as :
 \footnotesize
\begin{equation*} 
{\tau}\sum_{(s,n),a} v_k((s,n),a)( \sum_{s'} w_k(s',n+1)(\tilde{p}_k(s'|(s,n),a) - p_k(s'|(s,n),a))
\end{equation*}
\begin{align*} 
\leq \tau\sum_{(s,n),a} v_k((s,n),a)(  \lVert \mathbf{w}_k \rVert_\infty  \lVert  \mathbf{\hat{p}}_k(\cdot|(s,n),a) - \mathbf{p}(\cdot|(s,n),a) \rVert_1)
\end{align*}

\begin{align} \label{eqn:del_p_first_term}
\leq \sum_{(s,n),a} v_k((s,n),a) \cancel{2\tau}\sum_{(s,n),a} \sqrt{\frac{14SN \log(2At_k / \delta)}{n_k((s,n),a)}} D_{aug}/\cancel{2\tau}
\end{align}
\normalsize
where the last inequality uses the confidence bound \eqref{eqn:confidence_bound_p}. We note that the aperiodicity transformation coefficient gets canceled out and does not appear in the regret term.

Following the proof of \cite[Second term, Section 4.3.2]{auer2008near}, the second term in \eqref{eqn:del_p} can be bounded as:

\footnotesize
\begin{equation} \label{eqn:del_p_second_term}
\begin{split}
\tau \sum_{k=1}^m \sum_{(s,n),a} v_k((s,n),a) & ( \sum_{s'} u_i(s',n+1) p_k(s'|(s,n),a) - u_i(s,n))\\
& \leq \tau\ D_{aug}^\tau \sqrt{\frac{5}{2}T \log \frac{8T}{\delta}} + m \tau\ D_{aug}^\tau \\
& \leq \cancel{\tau}\ D_{aug}/\cancel{\tau} \sqrt{\frac{5}{2}T \log \frac{8T}{\delta}} + m \cancel{\tau} D_{aug}/\cancel{\tau} 
\end{split}
\end{equation}
\normalsize
with probability at least $1-\frac{\delta}{12T^{5/4}}$ ,where $m \leq SNA \log \frac{8T}{SNA}$ is the number of episodes as in \cite[Appendix C.2]{auer2008near}.
\vspace{+2mm}
\subsubsection{Bounding $\Delta_k^r$}
\footnotesize
\begin{equation} \label{eqn:del_r}
\begin{split}
\Delta_k^r & = \sum_{(s,n),a} v_k((s,n),a) (\tilde{r}((s,n),a)) - r((s,n),a)) \\
& \leq \sum_{(s,n),a} v_k((s,n),a) (|\tilde{r}((s,n),a)) - \hat{r}((s,n),a))| \\
& + |\hat{r}((s,n),a)) - r((s,n),a))|) 
\\  & \leq 2\sum_{(s,n),a} v_k((s,n),a) \sqrt{\frac{7 \log(2SAt_k / \delta)}{2 n_k((s,n),a)}}   
\end{split}
\end{equation}

\normalsize
where the last inequality uses the confidence bound \eqref{eqn:confidence_bound_r}.

\subsection{Completing the Proof} \label{subsec:completing_the _proof}

Thus, we can write the total episodic regret using \eqref{eqn:del_p_del_r}, \eqref{eqn:del_p_first_term},\eqref{eqn:del_p_second_term}, and \eqref{eqn:del_r}, with probability at least $1-\frac{\delta}{12T^{5/4}}$:

\footnotesize
\begin{equation*} 
\begin{split}
\sum_{k=1}^m \Delta_k \mathds{1}_{M \in \mathcal{M}_k} & \leq \sum_{k=1}^m  \sum_{(s,n),a} v_k((s,n),a)D_{aug} \sqrt{\frac{14SN \log(2At_k / \delta)}{n_k((s,n),a)}} \\
& + D_{aug}\sqrt{\frac{5}{2}T \log \frac{8T}{\delta}} + D_{aug}  SNA \log \frac{8T}{SNA}
\\  & + (\sqrt{14 \log(2SAt_k / \delta)} + 2) \sum_{k=1}^m \sum_{(s,n),a} \frac{v_k((s,n),a)}{\sqrt{n_k((s,n),a)}} 
\end{split}
\end{equation*}

\normalsize
We can bound the term $ \sum_{k=1}^m \sum_{(s,n),a} \frac{v_k((s,n),a)}{\sqrt{n_k((s,n),a)}} \leq (\sqrt{2}+1)(\sqrt{SNAT})$ as in \cite[Section 4.3.3]{auer2008near}. Also, noting that $n_k((s,n),a)\leq t_k\leq T$.Thus,

\footnotesize
\begin{equation} \label{eqn:episodic_regret_bound}
\begin{split}
\sum_{k=1}^m \Delta_k \mathds{1}_{M \in \mathcal{M}_k} & \leq  D_{aug}\sqrt{\frac{5}{2}T \log \frac{8T}{\delta}} +   D_{aug}  SNA \log \frac{8T}{SNA} \\
& +(2D_{aug}\sqrt{14SN  \log(2AT / \delta)} + 2) (\sqrt{2}+1)(\sqrt{SNAT})
\end{split}
\end{equation}
\normalsize

Using \eqref{eqn:regret_decomposition}, \eqref{eqn:M_not_in_confidence_bound},  \eqref{eqn:episodic_regret_bound},  with a probability of $1-\frac{\delta}{4T^{5/4}}$, we can bound the total regret as:

\footnotesize
\begin{equation*} 
\begin{split}
\Delta & \leq \sum_{k=1}^m \Delta_k \mathds{1}_{M \in \mathcal{M}_k} + \sum_{k=1}^m \Delta_k \mathds{1}_{M \notin \mathcal{M}_k} + \sqrt{\frac{5}{8}T \log \frac{8T}{\delta}}  \\
& \leq  D_{aug}\sqrt{\frac{5}{2}T \log \frac{8T}{\delta}} + D_{aug}  SNA \log \frac{8T}{SNA} + (2D_{aug}\\
&\sqrt{14SN  \log(\frac{2AT}{\delta})}  + 2) (\sqrt{2}+1)(\sqrt{SNAT}) +\sqrt{T} + \sqrt{\frac{5}{8}T \log \frac{8T}{\delta}}
\end{split}
\end{equation*}

\normalsize

Further simplifications as in \cite[Appendix C.4]{auer2008near} yield the total regret as :
\footnotesize
\begin{equation*}
 \Delta \leq 34D_{aug}SN  \sqrt{AT\log(T / \delta)}
\end{equation*}
\normalsize
with a probability of $1 - \sum_{T=2}^\infty \frac{\delta}{4T^{5/4}} < 1 - \delta$ by union over all values of $T$.

\end{document}